  \providecommand\BibTeX{{%
    \normalfont B\kern-0.5em{\scshape i\kern-0.25em b}\kern-0.8em\TeX}}}
\begin{document}

%%
%% The "title" command has an optional parameter,
%% allowing the author to define a "short title" to be used in page headers.
% \title{Mapping periodic data to hyperspace with application in hyperdimensional computing}

% \title{Circular Hypervectors for learning periodic data with Hyperdimensional Computing}

% \title{Circular Hypervectors for faster generalization on learning with periodic data with Hyperdimensional Computing}

% \title{Basis Hypervector extension for Circular and Periodic data: learning with Hyperdimensional Computing}

\title{An Extension to Basis-Hypervectors for Learning from Circular Data in Hyperdimensional Computing}

%%
%% The "author" command and its associated commands are used to define
%% the authors and their affiliations.
%% Of note is the shared affiliation of the first two authors, and the
%% "authornote" and "authornotemark" commands
%% used to denote shared contribution to the research.

\author{Igor Nunes, Mike Heddes, Tony Givargis and Alexandru Nicolau}
\affiliation{
  \institution{Department of Computer Science, University of California, Irvine}
  \city{Irvine}
  \state{California}
  \country{United States of America}
}
\email{{igord,mheddes, givargis, nicolau}@uci.edu}

%%
%% By default, the full list of authors will be used in the page
%% headers. Often, this list is too long, and will overlap
%% other information printed in the page headers. This command allows
%% the author to define a more concise list
%% of authors' names for this purpose.
\renewcommand{\shortauthors}{Nunes and Heddes, et al.}

%%
%% The abstract is a short summary of the work to be presented in the
%% article.
\begin{abstract}
\textit{Hyperdimensional Computing} (HDC) is a computation framework based on properties of high-dimensional random spaces. It is particularly useful for machine learning in resource-constrained environments, such as embedded systems and IoT, as it achieves a good balance between accuracy, efficiency and robustness. The mapping of information to the hyperspace, named \textit{encoding}, is the most important stage in HDC. At its heart are \textit{basis-hypervectors}, responsible for representing the smallest units of meaningful information. In this work we present a detailed study on basis-hypervector sets, which leads to practical contributions to HDC in general: 1) we propose an improvement for level-hypervectors, used to encode real numbers; 2) we introduce a method to learn from circular data, an important type of information never before addressed in machine learning with HDC. Empirical results indicate that these contributions lead to considerably more accurate models for both classification and regression with circular data.
\end{abstract}

%%
%% The code below is generated by the tool at http://dl.acm.org/ccs.cfm.
%% Please copy and paste the code instead of the example below.
%%
\begin{CCSXML}
<ccs2012>
   <concept>
       <concept_id>10010147.10010257</concept_id>
       <concept_desc>Computing methodologies~Machine learning</concept_desc>
       <concept_significance>500</concept_significance>
       </concept>
   <concept>
       <concept_id>10010147.10010169</concept_id>
       <concept_desc>Computing methodologies~Parallel computing methodologies</concept_desc>
       <concept_significance>300</concept_significance>
       </concept>
   <concept>
       <concept_id>10002950.10003712</concept_id>
       <concept_desc>Mathematics of computing~Information theory</concept_desc>
       <concept_significance>300</concept_significance>
       </concept>
 </ccs2012>
\end{CCSXML}

\ccsdesc[500]{Computing methodologies~Machine learning}
\ccsdesc[300]{Computing methodologies~Parallel computing methodologies}
\ccsdesc[300]{Mathematics of computing~Information theory}

%%
%% Keywords. The author(s) should pick words that accurately describe
%% the work being presented. Separate the keywords with commas.
\keywords{hyperdimensional computing, basis-hypervectors, circular data, machine learning, information theory.}

%% A "teaser" image appears between the author and affiliation
%% information and the body of the document, and typically spans the
%% page.
% \begin{teaserfigure}
%   \includegraphics[width=\textwidth]{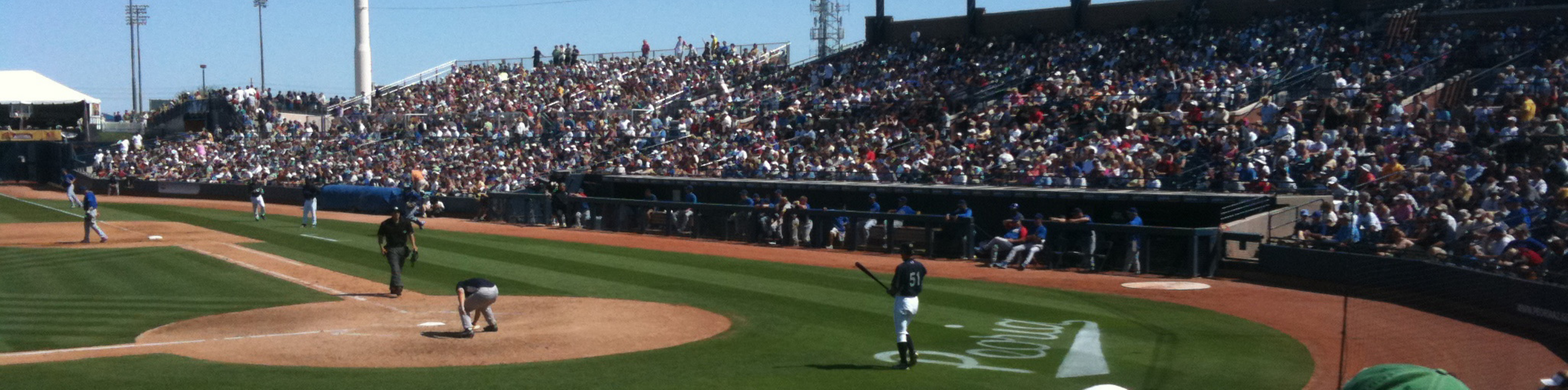}
%   \caption{Seattle Mariners at Spring Training, 2010.}
%   \Description{Enjoying the baseball game from the third-base
%   seats. Ichiro Suzuki preparing to bat.}
%   \label{fig:teaser}
% \end{teaserfigure}

%%
%% This command processes the author and affiliation and title
%% information and builds the first part of the formatted document.
\maketitle

\section{Introduction}
\label{sec:intro}

For some time now, machine learning has largely dictated not just academic research and industrial applications, but aspects of modern society in general. Such aspects range from the widespread recommendation systems, which influence the way we consume products, news and entertainment, to social networks that impact the way we behave and relate. Given such broad importance, the demand for devices capable of learning has spread to resource constrained platforms such as embedded systems and internet of things (IoT) devices~\cite{kaelbling1993learning,branco2019machine}. Such scenarios present obstacles to established methods, designed primarily to operate on powerful servers.

The most notable class of such methods is Deep Learning, which has achieved superhuman performance on certain tasks~\cite{schmidhuber2015deep}. Although initially inspired by the remarkably efficient animal brain, Deep Learning owes much of its high energy and computational cost to neurally implausible design choices~\cite{crick1989recent,marblestone2016toward,whittington2019theories}. The dilemma is that these choices, especially error back-propagation and large depth, are also key drivers of its success~\cite{lecun2015deep}. In this context, the search for alternatives has received significant attention from the scientific community~\cite{james2017historical,deneve2017brain,kanerva1988sparse}.

One emerging alternative is \textit{Hyperdimensional Computing} (HDC)~\cite{kanerva2009hyperdimensional}. Like Deep Learning, HDC is also inspired by neuroscience, but the central observation is that large circuits are fundamental to the brain's computation. Therefore, information in HDC is represented using \textit{hypervectors}, typically 10,000-bit words where each bit is \textit{independently and identically distributed} (i.i.d). The i.i.d relationship between bits, also known as holographic information representation, provides inherent robustness since each bit carries exactly the same amount of information. Furthermore, the arithmetic in HDC, as detailed in Section~\ref{sec:HDC}, is generally dimension-independent, which opens up the opportunity for massive parallelism, providing the efficiency sought in HDC.

HDC has already proven to be useful in several applications, including both learning problems, such as classification~\cite{ge2020classification} and regression~\cite{hersche2020integrating}, and classical problems, such as consistent hashing~\cite{heddes2022hyperdimensional}. Regardless of the application, the most fundamental step in HDC is mapping objects in the input space to the hyperspace, a process called \textit{encoding}. Useful encoding functions have already been proposed for several different types of data, such as time series~\cite{rahimi2016hyperdimensional}, text~\cite{rahimi2016robust}, images~\cite{manabat2019performance} and graphs~\cite{nunes2022hyperdimensional}. All these encodings have one thing in common: they start by encoding simple information (e.g. pixel values, vertices and edges, letters, amplitudes of a signal), which are then combined to represent something complex. In this work we study \textit{basis-hypervectors}, the encoding of these atomic pieces of information.

Basis-hypervectors are a cornerstone of HDC and directly effect the accuracy of learned HDC models, as will be evident from the results in Section~\ref{sec:experiments}. Nevertheless, to the best of our knowledge, this is the first work with a primary focus on the study of basis-hypervectors. We start with a comparative study of the two existing types of basis-hypervectors, \textit{random} and \textit{level-hypervectors}, used respectively to represent uncorrelated and linearly-correlated data. Inspired by information theory, our first contribution is an improved method to create level-hypervectors.

Based on the improved level-hypervectors, our main contribution is a basis-hypervector set for \textit{circular data} in a learning setting. Circular data are derived from the measurement of directions, usually expressed as an angle from a fixed reference direction. In addition, it is common to convert time measurements, such as the hours of a day, to angles. As we will discuss in more detail in Section~\ref{sec:circularData}, circular data is present in many fields of research, including astronomy, medicine, engineering, biology, geology and meteorology~\cite{fisher1995statistical,lee2010circular,mardia2000directional}. Embedded systems and IoT also deal with a wide variety of circular data, such as in robotics where the joints generate angular data~\cite{gao2014jhu}, and satellites that fly in elliptical orbits~\cite{lucas2017machine}. The importance is such that the study and interpretation of this type of data gave rise to a subdiscipline of statistics called \textit{directional statistics} (also known as circular or spherical statistics)~\cite{mardia2000directional,pewsey2021recent}. Despite this great relevance, our work is the first to address learning from circular data in HDC.

To assess the practical impact of these contributions, we conducted experiments with publicly available datasets that contain circular data relevant to real-world applications. We compared the proposed basis set with the two existing ones in both regression and classification tasks. We obtained an improvement of 7.2\% in the classification of 15 types of surgical gestures. In regression, the error is reduced by 67.7\% in temperature and power level prediction tasks.
\section{Hyperdimensional Computing}
\label{sec:HDC}

Hyperdimensional Computing (HDC) is a computation model that relies on very high dimensionality and randomness. Inspired by neuroscience, it seeks to mimic and exploit important characteristics of the animal brain while balancing accuracy, efficiency and robustness~\cite{kanerva2009hyperdimensional}. The central idea is to represent inputs $x \in \mathcal{X}$ by projecting them onto a hyperspace $\mathcal{H}=\{0, 1\}^d$, with $d \approx 10{,}000$ dimensions. This mapping $\phi: \mathcal{X}\to\mathcal{H}$ is called \textit{encoding}, and the resulting representations $\phi(x)$ are named \textit{hypervectors}.

The intuitive principle that guides the design of encoding functions is that similar objects in input space need to be mapped to similar hypervectors in hyperspace. We use the normalized Hamming distance as a measure of distance between hypervectors, which takes the form:
\begin{align*}
    \delta:\mathcal{H}\times\mathcal{H}\to\{\sigma\in\mathbb{R} \mid 0\leq\sigma \leq 1\}    
\end{align*}
We then define the hypervector \textit{similarity} to be $1 - \delta$. All cognitive tasks in HDC are ultimately based on similarity. Predictions or decisions are inferred from a model which is created by transforming and combining information using HDC arithmetic.

\subsection{Operations}
\label{sec:operations}
The arithmetic in HDC is based on a simple set of two element-wise, dimension-independent, operations in addition to a permutation operator~\cite{kanerva2009hyperdimensional}. The three operations, \textit{binding}, \textit{bundling}, and \textit{permuting} are illustrated in Figure~\ref{fig:operations}.

\paragraph{Binding} The binding operation is used to ``associate'' information. The function $\otimes: \mathcal{H} \times \mathcal{H} \to \mathcal{H}$ multiplies two input hypervectors to produces a third vector dissimilar to both the operands. The binding operator is commutative and distributive over bundling and serves at its own inverse, i.e., $A\otimes (A\otimes B) = B$. The binding operation is efficiently implemented as element-wise XOR.

\paragraph{Bundling} The bundling operation is used to represent a set of information. The function $\oplus: \mathcal{H}\times\mathcal{H}\to\mathcal{H}$ performs addition on its inputs to produce an output hypervector that is similar to its operands. Bundling is implemented as an element-wise majority operation. The output then represents the mean-vector of its inputs.

\paragraph{Permuting} The permutation operator is often used to establish an order, that is, to differentiate permutations of a sequence. The function takes the form $\Pi: \mathcal{H} \to \mathcal{H}$, whose output hypervector is dissimilar to the input. The exact input can be retrieved with the inverse operation. The operator performs multiplication with a permutation matrix and the most common is a cyclic shift. With $\Pi^i(A)$ we denote a cyclic shift of the elements of $A$ by $i$ coordinates.

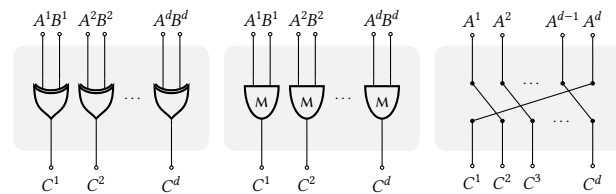
\begin{figure}[h]
    \centering
    \begin{circuitikz}[scale=0.4, transform shape]
    \tikzset{every node}=[font=\sffamily]
    
    % Operation bounding boxes
    \fill [rounded corners=1ex, gray!10](-1.25,-1) rectangle (5.25,2.5);
    \fill [rounded corners=1ex, gray!10](5.75,-1) rectangle (12.25,2.5);
    \fill [rounded corners=1ex, gray!10](12.75,-1) rectangle (19.25,2.5);
    
    % Binding gates
    \node[xor port, rotate=-90] (xor1) at (0, 0)    {};
    \node[xor port, rotate=-90] (xor2) at (1.5, 0)    {};
    \node at (2.75, 0.75) {\LARGE $\dots$};
    \node[xor port, rotate=-90] (xor3) at (4, 0)  {};
    
    % Bundling gates
    \node[and port, rotate=-90] (and1) at (7, 0)    {\LARGE \rotatebox{90} M};
    \node[and port, rotate=-90] (and2) at (8.5, 0)    {\LARGE \rotatebox{90} M};
    \node at (9.75, 0.75) {\LARGE $\dots$};
    \node[and port, rotate=-90] (and3) at (11, 0)  {\LARGE \rotatebox{90} M};
    
    % Permute wires
    \draw (14, 1.25) to[short, *-*] (15, 0);
    \draw (15, 1.25) to[short, *-*] (16, 0);
    \node at (16, 1.25) {\LARGE $\dots$};
    \node at (17, 0) {\LARGE $\dots$};
    \draw (17, 1.25) to[short, *-*] (18, 0);
    \draw (18, 1.25) to[short, *-*] (14, 0);
    
    % % Inputs
    % \draw (1, 3.5) node[above, yshift=1ex]{\huge $A$} to[multiwire={\LARGE d}, o-*] (1, 2.5);
    % \draw (3, 3.5) node[above, yshift=1ex]{\huge $B$} to[multiwire={\LARGE d}, o-*] (3, 2.5);
    
    % \draw (8, 3.5) node[above, yshift=1ex]{\huge $A$} to[multiwire={\LARGE d}, o-*] (8, 2.5);
    % \draw (10, 3.5) node[above, yshift=1ex]{\huge $B$} to[multiwire={\LARGE d}, o-*] (10, 2.5);
    
    % \draw (16, 3.5) node[above, yshift=1ex]{\huge $A$} to[multiwire={\LARGE d}, o-*] (16, 2.5);
    
    % % Outputs
    % \draw (2, -1) to[multiwire={\LARGE d}, *-o] (2, -2) node[below, yshift=-1ex]{\huge $C$};
    
    % \draw (9, -1) to[multiwire={\LARGE d}, *-o] (9, -2) node[below, yshift=-1ex]{\huge $C$};
    
    % \draw (16, -1) to[multiwire={\LARGE d}, *-o] (16, -2) node[below, yshift=-1ex]{\huge $B$};
    
    % Inputs
    \draw (xor1.in 2) to[short, -o] ++(0, 1.5) node[above, yshift=1ex]{\huge $A^1$};
    \draw (xor2.in 2) to[short, -o] ++(0, 1.5) node[above, yshift=1ex]{\huge $A^2$};
    \draw (xor3.in 2) to[short, -o] ++(0, 1.5) node[above, yshift=1ex]{\huge $A^d$};
    
    \draw (xor1.in 1) to[short, -o] ++(0, 1.5) node[above, yshift=1ex]{\huge $B^1$};
    \draw (xor2.in 1) to[short, -o] ++(0, 1.5) node[above, yshift=1ex]{\huge $B^2$};
    \draw (xor3.in 1) to[short, -o] ++(0, 1.5) node[above, yshift=1ex]{\huge $B^d$};
    
    \draw (and1.in 2) to[short, -o] ++(0, 1.5) node[above, yshift=1ex]{\huge $A^1$};
    \draw (and2.in 2) to[short, -o] ++(0, 1.5) node[above, yshift=1ex]{\huge $A^2$};
    \draw (and3.in 2) to[short, -o] ++(0, 1.5) node[above, yshift=1ex]{\huge $A^d$};
    
    \draw (and1.in 1) to[short, -o] ++(0, 1.5) node[above, yshift=1ex]{\huge $B^1$};
    \draw (and2.in 1) to[short, -o] ++(0, 1.5) node[above, yshift=1ex]{\huge $B^2$};
    \draw (and3.in 1) to[short, -o] ++(0, 1.5) node[above, yshift=1ex]{\huge $B^d$};
    
    \draw (14, 1.25) to[short, -o] ++(0, 1.6) node[above, yshift=1ex]{\huge $A^1$};
    \draw (15, 1.25) to[short, -o] ++(0, 1.6) node[above, yshift=1ex]{\huge $A^2$};
    \draw (17, 1.25) to[short, -o] ++(0, 1.6) node[above, yshift=1ex]{\huge $A^{d-1}$};
    \draw (18, 1.25) to[short, -o] ++(0, 1.6) node[above, yshift=1ex]{\huge $A^d$};
    
    % Outputs
    \draw (xor1.out) to[short, -o] ++(0, -1.4) node[below, yshift=-1ex]{\huge $C^1$};
    \draw (xor2.out) to[short, -o] ++(0, -1.4) node[below, yshift=-1ex]{\huge $C^2$};
    \draw (xor3.out) to[short, -o] ++(0, -1.4) node[below, yshift=-1ex]{\huge $C^d$};

    \draw (and1.out) to[short, -o] ++(0, -1.4) node[below, yshift=-1ex]{\huge $C^1$};
    \draw (and2.out) to[short, -o] ++(0, -1.4) node[below, yshift=-1ex]{\huge $C^2$};
    \draw (and3.out) to[short, -o] ++(0, -1.4) node[below, yshift=-1ex]{\huge $C^d$};
    
    \draw (14, 0) to[short, -o] ++(0, -1.5) node[below, yshift=-1ex]{\huge $C^1$};
    \draw (15, 0) to[short, -o] ++(0, -1.5) node[below, yshift=-1ex]{\huge $C^2$};
    \draw (16, 0) to[short, -o] ++(0, -1.5) node[below, yshift=-1ex]{\huge $C^3$};
    \draw (18, 0) to[short, -o] ++(0, -1.5) node[below, yshift=-1ex]{\huge $C^d$};

\end{circuitikz}
    \caption{\label{fig:operations} \textit{Binding}, \textit{bundling}, and \textit{cyclic shift} permutation operations illustrated on binary hypervectors $A$ and $B$ where the superscript denotes the element index of the hypervector. The logical gates in the bundling operation are \textit{majority gates}.} 
\end{figure}

\medskip
Much of HDC's ability to learn comes from the fact that the very high dimension of the $\mathcal{H}$-space allows combining information with these operations while preserving the information of the operands with high probability, due to the existence of a huge number of quasi-orthogonal vectors in the space. For a more theoretical analysis of these properties see the work by Thomas et al.~\cite{thomas2021theoretical}.

\subsection{Classification}
\label{sec:HDCclassification}

Once the encoding function $\phi$ is defined, the training of a classification model with HDC is quite intuitive. An overview of the standard HDC classification framework is illustrated in Figure~\ref{fig:HDCClassification}. For each class $i\in\{1,\dots,k\}$ in the training set, we construct a hypervector $M_i$ as follows:
\begin{align*}
    M_i = \bigoplus_{j :\ell(x_j )=i} \phi(x_j )
\end{align*}
where each $x_j \in {\mathcal X}$ is a training sample and $\ell(x_j ) \in \{1,\dots,k\}$ its respective label. The $\bigoplus$ symbol represents the bundling of hypervectors. The hypervector $M_i$ has the smallest average distance to the hypervectors obtained by encoding the training samples of class $i$. For this reason, each of these hypervectors is used as a ``prototype'' of its respective class and is referred to as a \textit{class-vector}. A trained HDC classification model is therefore denoted by $\mathcal{M} = \{M_i , \dots ,M_k\}$, and simply consists of a class-vector for each class.

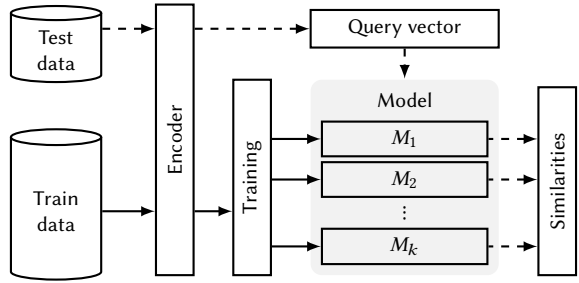
\begin{figure}[ht]
 \centering
 \begin{tikzpicture}[thick]
 
 \tikzstyle{every node}=[font=\footnotesize\sffamily]
 
 \node[
       cylinder,
    %   cylinder uses custom fill,
    %   cylinder body fill=yellow!50,
    %   cylinder end fill=yellow!50,
      shape border rotate=90,
      minimum width=1.2cm,
      aspect=0.25,
      draw, align=center] (test_ds) {Test\\data};
      
     \node[
       cylinder,
    %   cylinder uses custom fill,
    %   cylinder body fill=yellow!50,
    %   cylinder end fill=yellow!50,
      shape border rotate=90,
      minimum width=1.2cm,
      minimum height=2cm, 
      aspect=0.25,
      draw, align=center, below=0.6cm of test_ds] (train_ds) {Train\\data};
      
     \node[draw, minimum height=0.5cm, minimum width=3.6cm, yshift=-0.25cm, rotate=90, right=0.7cm of train_ds.south east, anchor=north west] (enc) {Encoder};
     
     \node[draw, minimum height=0.5cm, minimum width=2.6cm, rotate=90, right=0.5cm of enc.south west, anchor=north west] (train) {Training};
     
     \node[rounded corners=1ex, fill=gray!10, minimum height=2.6cm, minimum width=2.5cm, right=0.5cm of train.south west, anchor=south west] (model) {};
     
     \node[below=0.05cm of model.north, anchor=north] (model_lbl) {Model};
     \node[draw, below=0.05cm of model_lbl.south, anchor=north, minimum width=2.2cm] (m1) {$M_1$};
     \node[draw, below=0.05cm of m1.south, anchor=north, minimum width=2.2cm] (m2) {$M_2$};
     \node[below=-0.05cm of m2.south, anchor=east, rotate=90] (mdot) {$...$};
     \node[draw, below=-0.05cm of mdot.west, anchor=north, minimum width=2.2cm] (mk) {$M_k$};
          
     \node[draw, minimum height=0.5cm, minimum width=2.5cm, above=0.4cm of model.north west, anchor=south west] (query) {Query vector};
     
     \node[draw, minimum width=2.5cm, minimum height=0.5cm, rotate=90, right=0.5cm of model.south east, anchor=north west] (sim) {Similarities};
     
    \path
    (test_ds.east |- query.west) edge[-latex, dashed] (query.west -| enc.north)
    (enc.south |- query.west) edge[-latex, dashed] (query.west)
    (train_ds.east) edge[-latex] (train_ds.east -| enc.north)
    (enc.south |- train_ds.east) edge[-latex] (train_ds.east -| train.north)
    (train.south |- m1.west) edge[-latex] (m1.west)
    (train.south |- m2.west) edge[-latex] (m2.west)
    (train.south |- mk.west) edge[-latex] (mk.west)
    (query.south) edge[-latex, dashed] (model.north)
    (sim.north |- m1.east) edge[latex-, dashed] (m1.east)
    (sim.north |- m2.east) edge[latex-, dashed] (m2.east)
    (sim.north |- mk.east) edge[latex-, dashed] (mk.east)
    ;

\end{tikzpicture}
  \caption{\label{fig:HDCClassification} Overview of the Hyperdimensional Computing classification framework. Solid lines indicate training steps, dashed lines indicate inference steps.}
\end{figure}

Given an unlabeled input $\hat{x}\in \mathcal{X}$, i.e., a test sample, and a trained model $\mathcal{M}$, we simply compare $\phi(\hat{x})$, the \textit{query-vector}, with each class-vector and infer that the label of $\hat{x}$ is the one that corresponds to the most similar class-vector:
\begin{align*}
    \ell^{\star}(\hat{x}) = \underset{i\in\{1,\dots,k\}}{\arg\min} \;\delta\left(\phi(\hat{x}), M_i \right)
\end{align*}
where $\ell^{\star}(\hat{x}) \in \{1, \dots, k\}$ is the predicted class for $\hat{x}$.

\subsection{Regression}
\label{sec:HDCregression}

In a regression setting, the model $\mathcal{M}$ consists of a single hypervector, which memorizes training samples with their associated label. This is different from the classification setting where a sample's class is implicitly stored as the index of the class-vector. The label of each sample in a regression setting is a real number $\ell(x) \in \mathbb{R}$. To encode a label, an invertible encoding function $\phi_{\ell}$, which maps real numbers to hypervectors, needs to be introduced. The invertibility property is needed to allow labels to be determined during inference. The function's output is a finite subset $\mathbf{L} = \{L_1, \dots, L_k\}$ of all hypervectors in $\mathcal{H}$ whose generation is discussed in Section~\ref{sec:level-creation}. The hypervectors in $\mathbf{L}$ are linearly correlated such that the closer the real numbers they represent, the more similar the hypervectors are. A model is then obtained as follows:
\begin{align*}
    \mathcal{M} = \bigoplus_i \phi(x_i) \otimes \phi_{\ell}(\ell(x_i))
\end{align*}

A prediction can be made given a trained model $\mathcal{M}$ and an unlabeled input $\hat{x} \in \mathcal{X}$. First the approximate label hypervector is obtained by binding the model with the encoded sample $\mathcal{M} \otimes \phi(\hat{x}) \approx \phi_{\ell}(\ell(\hat{x}))$ which exploits the self-inverse property of binding. The remaining terms add noise, making it approximately equal~\cite{kanerva2009hyperdimensional, thomas2021theoretical}. The precise label hypervector is then the most similar label hypervector $L_l$, where:
\begin{align*}
    l = \underset{i\in\{1, \dots, k\}}{\arg\min} \;\delta\left(\mathcal{M} \otimes \phi(\hat{x}), L_i \right)
\end{align*}
Finally, the label is obtained by decoding the label hypervector using the inverse of the label encoding function:
\begin{align*}
    \ell^{\star}(\hat{x}) = \phi_{\ell}^{-1}(L_l)
\end{align*}

The encoding functions $\phi$ and $\phi_{\ell}$, used in both classification and regression, are domain specific and use the HDC operations to encode complex information (e.g., a word) by combining simpler, atomic pieces of information (e.g., the letters of a word). An example encoding for words is discussed in Section~\ref{sec:randomHVS}. The first important decision in designing an HDC encoding is how to represent atomic information as hypervectors. These hypervectors represent the smallest pieces of meaningful information and are referred to as \textit{basis-hypervectors}. These sets are the central subject in this paper. Our main goal is to show that a special set of basis-hypervectors (described in Section~\ref{sec:circularHVs}) is more appropriate for dealing with circular data.
\section{Basis-Hypervectors}
\label{sec:basisHVs}

In this section we describe the two existing basis-hypervector sets. These are stochastically created hypervector sets used to encode the smallest units of meaningful information. Their main feature is the pairwise distance as illustrated in Figure~\ref{fig:similarity-map}. This section serves as background for Section~\ref{sec:circularHVs}, where we present another basis set never before applied to learning.

\begin{figure}[h]
 \centering
    \scalebox{0.75}{\subimport*{content/images/}{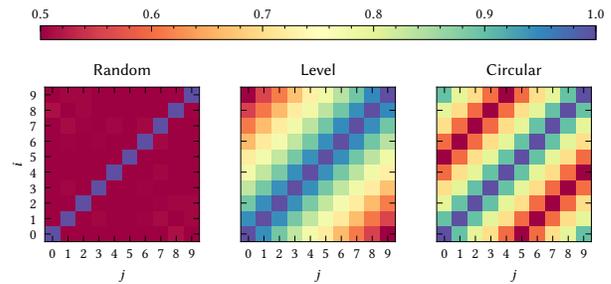}}
  \caption{\label{fig:similarity-map} Pairwise similarity of each $i$-th and $j$-th hypervector within a basis-hypervector set of size 12. Comparing between random, level and circular basis-hypervectors.} 
\end{figure}

\subsection{Encoding symbols}
\label{sec:randomHVS}

Early applications in HDC were learning from sequences of symbols such as text data~\cite{rahimi2016robust}. The units of information, in this case characters, were mapped (one-to-one) to hypervectors $\mathbf{R}=\{R_1, \dots, R_m\}$ sampled \textit{uniformly} at random from the hyperspace $\mathcal{H}$, called \textit{random-hypervectors}. From this, a word $w = \{\alpha_1 ,\dots, \alpha_n\}$ is typically encoded as:
\begin{align*}
    \phi(w) = \bigoplus_{i=1}^n \Pi^{i} \big(\phi_{\mathbf{R}}(\alpha_i )\big)
\end{align*}
where $\phi_{\mathbf{R}}(\alpha_i ) \in \mathbf{R}$ maps the character $\alpha_i$ to its corresponding random-hypervector $R_i$, and $\Pi$ is the permutation operation which ensures that information about the location of each character in the word is preserved. In general, any sequence or set comprised of symbolic or categorical data can be encoded using random-hypervectors.

Because of the high dimensionality of $\mathcal{H}$, each pair of symbols $R_i$ and $R_j$ in $\mathbf{R}$ (with $i \ne j$) is quasi-orthogonal with high probability~\cite{kanerva1988sparse}. In other words, there is no correlation between the hypervectors in the $\mathbf{R}$ basis set. While this seems suitable for encoding letters, which to some extent represent unrelated information, clearly it is not as adequate for other kinds of unitary information, such as real numbers.

\subsection{Encoding real numbers}
\label{sec:levelHVS}

Many domains use real numbers to represent information such as distance, time, energy and velocity. Representing these values in $\mathcal{H}$-space requires a mapping to a discrete set of hypervectors $\mathbf{L} = \{L_1 , \dots, L_m \}$. A real number is mapped to a hypervector with the encoding function $\phi_{\mathbf{L}}$. 

% The first step is to quantize the interval $[a,b]$ into $m$ subintervals $\{\xi_1 ,\dots, \xi_m \}$ as follows:
% \begin{align*}
%     \xi_i = \left[a+(i-1)\frac{b-a}{m}\;,\; a+i\frac{b-a}{m}\right]
% \end{align*}
% after this, each of these subintervals is represented in the hyperspace by $\phi_{\mathbf{L}}(\xi_i ) = \mathbf{l}_i $. 

Let $[a,b]$ denote the interval $\{x \in \mathbb{R} \mid a \leq x \leq b\}$ we want to represent. The first step is to place $m$ points $\{\xi_1 ,\dots, \xi_m \}$ evenly over the interval $[a, b]$, where:
\begin{align*}
\xi_i = a + (i - 1) \frac{b - a}{m - 1}
\end{align*}
Any real number $x$ is then represented in the hyperspace by $\phi_{\mathbf{L}}(x) = L_l$, where:
\begin{align*}
    l = \underset{i\in\{1,\dots,m\}}{\arg\min} \; \lvert x - \xi_i \rvert
\end{align*}

The central question here is how to construct the set of hypervectors $\mathbf{L}$. One might again consider using vectors uniformly sampled from the $\mathcal{H}$-space. Although not entirely wrong, encoding a real interval this way fails to capture an important relationship between the points on the interval. There is clearly a stronger relation between neighboring points (e.g., $\xi_1$ and $\xi_2$) when compared to $\xi_1$ and $\xi_m$, due to the different distance between these points. As indicated by our results presented in Section~\ref{sec:experiments}, encoding strategies capable of capturing such relationships (see also Section~\ref{sec:circularHVs}) lead to more powerful models. In the next section we present the way this relationship between hypervectors has thus far been achieved. In addition, we propose an improved method that we argue has more representational power.
\section{Generating level-hypervectors}
% \section{On the creation of level-hypervectors}
\label{sec:level-creation}

A method for representing real numbers with linearly correlated hypervectors was first described by Rahimi et al.~\cite{rahimi2016hyperdimensional} and Widdows and Cohen~\cite{widdows2015reasoning}. These sets are widely used in HDC and are generally referred to as \textit{level-hypervectors}. The generation of a level-hypervector set $\mathbf{L}=\{L_1, \dots, L_m \}$ starts by assigning a uniform random vector to $L_1$. Each subsequent vector is obtained by flipping $d/2/m$ bits. During the process, flipped bits are never unflipped. Therefore, the vectors $L_1$ and $L_m$, representing the endpoints of the interval, share \textit{exactly} $d/2$ bits, making them \textit{precisely} orthogonal. We argue that if the precise distance constraint is relaxed, a set with greater representation power can be created. We substantiate this claim in the next section.

% In the next section we argue that a set with more representational power can be created because the precise distances between the level-hypervectors poses a constraint on the number of level-hypervector sets that can be created which has a direct relation with the information a level-hypervector contains. By relaxing the exact distance constraint to allow the distances to be relative in expectation, making the endpoints quasi-orthogonal, more level-hypervector sets can be created which increases their information content.

% Despite their popularity, we argue that there is confusion in the literature, including conflicting descriptions, regarding the creation of these sets. 

% Clearly this strongly limits the possible outcomes of its generation which, in view of what has been discussed, is equivalent to constraining the representation power of the basis-hypervector set.

\subsection{The importance of \textit{quasi}-orthogonality}
\label{sec:imporance-orthogonality}
From an information theory perspective, the amount of information conveyed in the outcome of a random trial is a function of the probability of that outcome. More formally, for a given random variable with possible outcomes $\varepsilon_1,\dots,\varepsilon_n$, which occur with probability $\mathbb{P}(\varepsilon_1),\dots,\mathbb{P}(\varepsilon_n)$, the \textit{Shannon information content} $\mathscr{I}$ of an outcome $\varepsilon_i$ is defined as~\cite{mackay2003information}:
\begin{align*}
    \mathscr{I}(\varepsilon_i) = \log_2\frac{1}{\mathbb{P}(\varepsilon_i)}
\end{align*}

If we think of a random-hypervector set as a random sample, the probability of each realization is extremely low. Thus the entropy, or information content, is very high. This is one of the main theoretical foundations of HDC.
% One detail is very important to be highlighted at this point. Remember that, despite the intention of representing information without any correlation, the random-hypervectors are independently (and uniformly) sampled, which results in \textit{quasi}-orthogonal vectors, despite the intuition that they should be \textit{precisely} orthogonal.

Note that random-hypervectors are independently and uniformly sampled, resulting in \textit{quasi}-orthogonal vectors. These vectors are simple to create, but more importantly, they have greater representational power than \textit{precisely}-orthogonal vectors. In mathematical terms, while the number of orthogonal vectors in $\mathcal{H}$ is $d$, the number of quasi-orthogonal vectors is almost $2^d$~\cite{kanerva1988sparse}. Given that each set is sampled uniformly, a much larger sample space implies a much lower probability of occurrence per outcome. By the definition above, this results in greater information content.

\subsection{Level-hypervectors revisited: applying the notion of ``quasi''}

% \subsection{Applying the notion of \textit{quasi}-orthogonality\igor{a little imprecise: need to mention its to level-hvs and not just about the endpoints}}

% Building on the discussion just presented, in this section we argue that the creation of level-hypervectors, as widely used in the HDC literature, can be improved. First, note that restricting the basis set to correlated hypervectors already lowers the information content of the set. We address this issue and propose an improvement in Section~\ref{sec:IncreasingRandomness}. 

The existing method for generating a level-hypervector set does not incorporate an important property of the random-hypervector set. As we discussed in the previous section, the key to the representational power of random-hypervectors---and more generally of HDC---comes from the relaxed notion of distance: quasi-orthogonality.
% Another problem is that the existing method for generating a level-hypervector set does not incorporate an important property of random-hypervectors. As we discussed in the previous section, the key to the representational power of random-hypervectors---and more generally of HDC---comes from the relaxed notion of distance, quasi-orthogonality.
% Another problem is that, although initially inspired by and proposed as a generalization of random-hypervectors, the existing method for generating level-hypervectors fails to reproduce one of their most important aspects. As we discussed in the previous section, the key to the representational power of random-hypervectors --- and more generally of HDC --- comes from the more relaxed notion of distance (quasi-orthogonality). % In contrast, the methodology currently used to generate level-hypervectors $\mathbf{L}=\{L_1, \dots, L_m \}$ starts assigning a uniform random vector to $L_1$ and each subsequent vector is \textit{exactly} $d/2/m$ distant from the previous one. In particular, the vectors $L_1$ and $L_m$, representing the endpoints of the interval are \textit{precisely} orthogonal.%
In contrast, the level-hypervectors created with the existing method, as described above, have a fixed distance between each pair of hypervectors. This limits the number of possible outcomes of their generation which, in view of what has been discussed, is equivalent to constraining the representation power of the basis-hypervector set. 
Instead, we want the distance between two hypervectors $L_i$ and $L_j$ in $\mathbf{L}$, with $i < j$, to be proportional to $j-i$ \textit{in expectation}. If we denote by $\Delta^{i,j}$ the desired value for $\mathbb{E}\left[\delta(L_i, L_j)\right]$, then:
% Consider two hypervectors $L_i$ and $L_j$ in $\mathbf{L}$, with $i < j$. These represent $\xi_i$ and $\xi_j$, respectively, from a list of $m$ equidistant points, as was presented in Section~\ref{sec:levelHVS}. Thus, to capture their linear correlation, we want the distance $\rho(L_i, L_j)$ to be proportional to $j-i$. In addtion, we want the endpoints to be orthogonal $\rho(L_1, L_m) = d/2$ (quasi-orthogonal). Simply put, if we denote by $\Delta^{i,j}$ the expected distance between $L_i$ and $L_j$, then: 
% Consider two vectors $L_i$ and $L_j$ in $\mathbf{L}$, with $i < j$. From what was presented in Section~\ref{sec:levelHVS}, remember that these represent, respectively, $\xi_i$ and $\xi_j$, from a list of $m$ equidistant points $\{\xi_1,\dots,\xi_m\}$. For this reason, we want the distance $\rho(L_i, L_j)$ to be proportional (but not precisely) to $j-i$, and also $\rho(L_1, L_m) \approx d/2$ (quasi-orthogonal). Simply put, if we denote by $\Delta^{i,j}$ the expected distance between $L_i$ and $L_j$, then: 
\begin{align*}
    \Delta^{i,j} = \frac{j-i}{2(m-1)}
\end{align*}

% Remember that in the existing approach to generate $L_j$, we simply flip $\Delta^{i,j}$ bits of $L_i$ in \textit{different} positions. The problem is that solution results in $\rho(L_i, L_j) = \Delta^{i,j}$ and, for the reasons mentioned above, a more relaxed distance relationship --- $\rho(L_i, L_j) \approx \Delta^{i,j}$ --- is crucial.

% An intuitive idea to solve this problem, that is, make them just ``nearly" $\Delta^{i,j}$-distant, is to allow multiple flips per position and perform a number of flips $\widetilde{\Delta}^{i,j}$ that results in the \textit{expected value} of the distance to be $\Delta^{i,j}$. Notice that, as in random-hypervectors, $L_1$ and $L_m$  would just be \textit{expected to be} orthogonal. It remains to be determined, therefore, such a value $\widetilde{\Delta}^{i,j}$. 

An intuitive idea, that we describe to clarify the problem, is to allow multiple flips per position and perform a number of flips $\digamma^{i,j}$ that results in an expected distance $\Delta^{i,j}$. One could use a Markov Chain as illustrated in Figure~\ref{fig:markov-chain} to model the bit flipping process. Each state $S(t)$ represents a distance $\delta$ from $L_i$. The process always starts from $S(0) = 0$. At each following step one bit is flipped at a random position, moving either one bit away from $L_i$ (represented by solid arrows) or go back one bit (dashed arrows). The probability of moving away from $L_i$ in $S(t)$ is:
\begin{align*}
    \mathbb{P}\left[S(t+1) = S(t) + 1\right] = \frac{d - S(t)}{d}
\end{align*}
Observe that $\digamma^{i,j}$ is the expected number of steps taken until absorption in the $\Delta^{i,j}$ state.

\begin{figure}
    \centering
    \begin{tikzpicture}[auto,node distance=15mm, thick,main node/.style={circle, draw,minimum size=.4cm,inner sep=0pt, minimum height=.6cm}]
    \tikzset{every node}=[font=\footnotesize]

    \node[main node] (1) {$0$};
    \node[main node] (2) [right of=1] {$1/d$};
    \node[main node] (3) [right of=2] {$2/d$};
    \node[minimum width = 1.3cm] (4) [right of=3] {$\cdots$};
    \node[main node, ellipse] (5) [right of=4] {$\Delta^{i,j} - \frac{1}{d}$};
    \node[main node, ellipse] (6) [right of=5] {$\Delta^{i,j}$};
    
    \path
    (1) edge[-latex,bend left] node{} (2)
    (2) edge[-latex,bend left] node{} (3)
    (3) edge[-latex,bend left] node{} (4)
    (4) edge[-latex,bend left,in=160] node[anchor=center, midway, above]{} (5)
    (5) edge[-latex,bend left] node{} (6)
    
    (2) edge[-latex,bend left, dashed] node{} (1)
    (3) edge[-latex,bend left, dashed] node{} (2)
    (4) edge[-latex,bend left, dashed] node{} (3)
    (5) edge[-latex,bend left, dashed] node{} (4);
\end{tikzpicture}
    \caption{Markov Chain of a bit flipping process}
    \label{fig:markov-chain}
\end{figure}
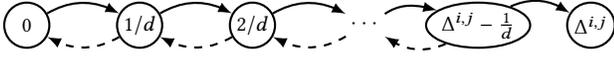

Let $u(k)$ denote the expected absorption time starting from state $k$. By considering each starting state and conditioning on each possible outcome, we get the following linear recurrence relation:
\begin{align*}
    u(k) = 
    \begin{cases}
    1+u(1) &\;\text{ if }\;k=0\\
    1+\frac{(d-k)u(k+1)+ku(k-1)}{d} &\;\text{ if }\;0 < k < \Delta^{i,j}\\
    0 &\;\text{ if }\;k=\Delta^{i,j}
    \end{cases}
\end{align*}
For a given dimension $d$, this is a solvable tridiagonal linear system~\cite{stone1973efficient}. The number of bits $\digamma^{i,j}$ we need to invert to obtain $L_j$ from $L_i$, such that $\mathbb{E}\left[\delta(L_i, L_j)\right] = \Delta^{i,j}$ is exactly $u(0)$, obtained from solving the system. This method generates what are called scatter codes~\cite{smith1990random} which map the input space nonlinearly to the similarities in $\mathcal{H}$-space. In the next section we propose a practical algorithm, simple to implement and use in HDC applications with a linear mapping.
% Although this analysis provides useful insight into the creation of level-hypervectors, in the next section we propose a practical algorithm, simple to implement and use in HDC applications.

\subsection{A new method for generating level-hypervectors}
\label{sec:newMethod}

Presented below, Algorithm~\ref{alg:level_hvs} starts by assigning two uniformly random hypervectors to $L_1$ and $L_m$, in addition to a $d$-dimensional vector $\Phi$ whose elements are sampled uniformly from [0,1]. Then, for each remaining level $L_l$ an interpolation threshold value $\tau_l$ is set and $\Phi$ acts as a filter to determine the origin of each bit, either from $L_1$ or $L_m$. This differs from the method by~\citet{rachkovskiy2005sparse}, which deterministically concatenates fractions of $L_1$ and $L_m$. Proposition~\ref{prop:level_hvs} establishes that the obtained set of level-hypervectors meets the previously motivated property, that is, for any $j>i$, the expected distance between any $L_i$ and $L_j$, is $\Delta^{i,j}$.

\begin{algorithm}
    \SetKwInOut{Input}{Input}
    \SetKwInOut{Output}{Output}

    \Input{Two integers $m$ and $d$.}
    \Output{A set of $m$ $d$-dimensional level-hypervectors $\mathbf{L} = \{L_1, \ldots, L_m\}$}
    $L_1 \gets$ uniform\_sample($\{0,1\}^d$)\\
    $L_m \gets$ uniform\_sample($\{0,1\}^d$)\\
    $\Phi \gets$ uniform\_sample($[0,1]^d$)\\
    
    \For{each remaining level $l \in \{2, \ldots, m-1\}$}
    {
        $\tau_l \gets \frac{m - l}{m - 1}$\\
        \For{each dimension $\partial \in \{1, \dots, d\}$}
        {
            \If{$\Phi^{(\partial)} < \tau_l$}{
                $L_l^{(\partial)} \gets L_1^{(\partial)}$
            }
            \Else{
                $L_l^{(\partial)} \gets L_m^{(\partial)}$ 
            }
        }
    }

    \Return $\{L_1, \ldots, L_m\}$
    \caption{Creation of a level-hypervector set using interpolation filters.}
    \label{alg:level_hvs}
\end{algorithm}

\begin{proposition}[]
\label{prop:level_hvs}
Let $\mathbf{L} = \{L_1 ,\dots\,L_m \}$ denote a set of hypervectors generated by Algorithm~\ref{alg:level_hvs}. For all $i$ and $j>i$ in $\{1,\dots,m\}$, we have $\mathbb{E}\left[\delta(L_i, L_j)\right] = \Delta^{i,j}$.
\label{thm:alg1}
\end{proposition}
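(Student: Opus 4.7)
The plan is to compute $\mathbb{E}[\delta(L_i,L_j)]$ by linearity of expectation, reducing the claim to a per-coordinate probability computation. Since the normalized Hamming distance satisfies
\[
    \delta(L_i,L_j) = \frac{1}{d}\sum_{\partial=1}^{d}\mathbbm{1}\!\left[L_i^{(\partial)}\neq L_j^{(\partial)}\right],
\]
and since the algorithm treats every coordinate identically and independently (the filter $\Phi^{(\partial)}$ is i.i.d.\ uniform on $[0,1]$ and the coordinates of $L_1$ and $L_m$ are i.i.d.\ uniform on $\{0,1\}$), it suffices to show that for a single coordinate $\partial$ we have $\mathbb{P}[L_i^{(\partial)}\neq L_j^{(\partial)}] = \Delta^{i,j}$.

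To do this I would fix a coordinate and condition on the position of $\Phi^{(\partial)}$ with respect to the two thresholds. Because $i<j$ we have $\tau_i = (m-i)/(m-1) > (m-j)/(m-1) = \tau_j$, and the algorithm's rule partitions the unit interval into three regions:
\begin{itemize}
    \item If $\Phi^{(\partial)} < \tau_j$, then both $L_i^{(\partial)}$ and $L_j^{(\partial)}$ equal $L_1^{(\partial)}$.
    \item If $\Phi^{(\partial)} \ge \tau_i$, then both equal $L_m^{(\partial)}$.
    \item If $\tau_j \le \Phi^{(\partial)} < \tau_i$, then $L_i^{(\partial)} = L_1^{(\partial)}$ while $L_j^{(\partial)} = L_m^{(\partial)}$.
\end{itemize}
In the first two cases the bits necessarily agree. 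In the third case, which has probability $\tau_i-\tau_j = (j-i)/(m-1)$, the two bits are $L_1^{(\partial)}$ and $L_m^{(\partial)}$, which are independent uniform bits, so they disagree with probability exactly $1/2$. Since $\Phi^{(\partial)}$ is independent of $(L_1^{(\partial)},L_m^{(\partial)})$, multiplying the two probabilities gives $\mathbb{P}[L_i^{(\partial)}\neq L_j^{(\partial)}] = \tfrac{1}{2}\cdot\tfrac{j-i}{m-1} = \Delta^{i,j}$. Summing over the $d$ coordinates, dividing by $d$, and invoking linearity of expectation yields the claim.

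I do not anticipate a serious obstacle: the algorithm is explicitly coordinate-wise and the three-case analysis above is complete. The only subtlety worth stating cleanly is the independence of $\Phi^{(\partial)}$ from $(L_1^{(\partial)},L_m^{(\partial)})$, which is what allows the factorization of probabilities in the middle case; this follows from the fact that Algorithm~\ref{alg:level_hvs} samples $L_1$, $L_m$, and $\Phi$ from independent distributions before the loop begins.
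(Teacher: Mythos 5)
Your proof is correct and follows essentially the same route as the paper's: reduce to a single coordinate via linearity of expectation and the i.i.d.\ structure, then condition on the position of $\Phi^{(\partial)}$ relative to the two thresholds $\tau_j < \tau_i$, using the independence of $\Phi$, $L_1$, and $L_m$ to get a disagreement probability of $\tfrac{1}{2}\cdot\tfrac{j-i}{m-1}=\Delta^{i,j}$. The only cosmetic difference is that you compute $\mathbb{P}\left[L_i^{(\partial)}\neq L_j^{(\partial)}\right]$ directly via a clean three-region partition, whereas the paper computes the complementary agreement probability through nested conditioning; your presentation is arguably tidier but the argument is the same.
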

\begin{proof}
First, from the definition of the normalized Hamming distance, we have:
\begin{align*}
    \delta \left(L_i, L_j\right) = \frac{1}{d}\sum_{\partial=1}^d \mathbbm{1}\left(L_i^{\partial} \ne L_j^{\partial}\right)
\end{align*}
where $\mathbbm{1}$ is the indicator function. By applying the linearity of expectation property, the i.i.d. property for all dimensions of $L_i$, and considering that the expectation of an indicator function equals the probability of the event, we get:
% \begin{align}
%     \begin{split}
%     \mathbb{E}\left[\delta(L_i, L_j)\right] = \frac{1}{d}\sum_{\partial=1}^d \mathbb{E}\left[\mathbbm{1}\left(L_i^{\partial} \ne L_j^{\partial}\right)\right] = 1 - \mathbb{P}\left(L_i^{\bar{\partial}} = L_j^{\bar{\partial}}\right)
%     \end{split}
% \end{align}
\begin{align}
    \mathbb{E}\left[\delta(L_i, L_j)\right] =  \mathbb{P}\left(L_i^{\bar{\partial}} \ne L_j^{\bar{\partial}}\right)
\end{align}
where $\bar{\partial} \in \{1,\dots,d\}$ indicates that the probability is dimension independent. Then, from Algorithm~\ref{alg:level_hvs} we have:
\begin{align*}
    \mathbb{P}(L_i^{\bar{\partial}}=L_j^{\bar{\partial}}) =
    \mathbb{P}(\Phi^{\bar{\partial}}<\tau_{i})\left(\genfrac{}{}{0pt}{}{\mathbb{P}(\Phi^{\bar{\partial}}<\tau_{j}|\Phi^{\bar{\partial}}<\tau_{i})\mathbb{P}(L_1^{\bar{\partial}}=L_1^{\bar{\partial}})}{+\mathbb{P}(\Phi^{\bar{\partial}}\geq\tau_{j}|\Phi^{\bar{\partial}}<\tau_{i})\mathbb{P}(L_1^{\bar{\partial}}=L_m^{\bar{\partial}})
    }\right)\\
    +\mathbb{P}(\Phi^{\bar{\partial}}\geq\tau_{i})\left(\genfrac{}{}{0pt}{}{\mathbb{P}(\Phi^{\bar{\partial}}\geq\tau_{j}|\Phi^{\bar{\partial}}\geq\tau_{i})\mathbb{P}(L_1^{\bar{\partial}}=L_1^{\bar{\partial}})}{+\mathbb{P}(\Phi^{\bar{\partial}}<\tau_{j}|\Phi^{\bar{\partial}}\geq\tau_{i})\mathbb{P}(L_1^{\bar{\partial}}=L_m^{\bar{\partial}})}\right)
\end{align*}
Given that $\Phi^{\bar{\partial}}$ is uniform in $[0,1]$ and $\tau_i=\frac{m-i}{m-1}$ according to the algorithm, we can calculate this probability to be:
\begin{align}
    \mathbb{P}\left(L_i^{\bar{\partial}}=L_j^{\bar{\partial}}\right) = 1 - \frac{j-i}{2(m-1)}
\end{align}
Considering that the event is binary, from Equations 1 and 2, we get: $\mathbb{E}\left[\delta\left(L_i, L_j\right)\right] = \Delta^{i,j}$.
\end{proof}

\section{Encoding Circular Data}
\label{sec:circularHVs}

\label{sec:circularData}

As described above, symbols and real numbers can be represented in the hyperspace with random and level-hypervector basis sets, respectively. However, not every type of data falls into these two categories. Consider, for instance, angular data in $\Theta=[0,2\pi]$. The distance $\rho \in [0,1]$ between two angles $\alpha$ and $\beta$ in $\Theta$ is defined as~\cite{lund1999least}:
\begin{align*}
    \rho(\alpha,\beta) = \frac{1}{2}\left(1-\cos(\alpha-\beta)\right)
\end{align*}
If we use level-hypervectors to encode the $\Theta$-interval, the distances between the hypervectors would not be proportional to the distance between the angles. Notice that the endpoints of an interval represented with level-hypervectors are completely dissimilar, while a circle has no endpoints.

Angles are widely used to represent information in meteorology~\cite{holzmann2006hidden}, ecology~\cite{tracey2005set,kempter2012quantifying}, medicine~\cite{gao2006application,gao2014jhu,ahmidi2017dataset}, astronomy~\cite{cabella2009statistical,marinucci2011random} and engineering~\cite{chirikjian2000engineering}. Moreover, many natural and social phenomena have circular-linear correlation on some time scale. Consider for example the seasonal temperature variations over a year or the behavior of fish with respect to the tides in a day. In these cases, it makes sense to represent the time intervals (e.g., Jan 1st - Dec 31st) as cyclic intervals~\cite{lund1999least,kempter2012quantifying}.

Given the multitude of applications using circular data, unsurprisingly there has been great scientific effort to adapt statistical and learning methodologies to handle it appropriately~\cite{lee2010circular}. This gave rise to a branch of statistical methodology known as \textit{directional statistics}~\cite{mardia2000directional,pewsey2021recent}. Despite all this effort, to the best of our knowledge, our work is the first to address the adaptation in the context of HDC learning. 

\subsection{Circular-hypervectors}
\label{sec:creatingCircularHVs}

An algorithm for creating a set of hypervectors whose distances are proportional to that of a set of equidistant points on a circle has recently been proposed by Heddes et al.~\cite{heddes2022hyperdimensional}. They used the set, called \textit{circular-hypervectors}, to create a dynamic hashing system, where the circular structure helps with evenly distributing requests across a changing population of servers~\cite{karger1997consistent}. We propose to adapt and generalize their algorithm to create a basis set of hypervectors suitable for learning from circular data using HDC.

We want to build a set of hypervectors $\mathbf{C}=\{C_1, \dots, C_m\}$\footnote{We assume $m$ to be even to simplify discussions. Sets of odd cardinality can be generated as subsets $\{C_1, C_3, C_5, \dots, C_{2m - 1} \}$ of a set of size $2m$.} such that for all $C_i$ and $C_j$ in $\mathbf{C}$ their distance $\delta$ in $\mathcal{H}$ is proportional to the distance between the angles they represent:
\begin{align*}
    \mathbb{E}\left[\delta(C_i, C_j)\right] = \frac{1}{2}\,\rho\left((i-1)\frac{2\pi}{m},(j-1)\frac{2\pi}{m}\right)
\end{align*}
This relationship is in terms of expected value to improve the information content as discussed in Section~\ref{sec:level-creation}.

\begin{figure}[H]
\centering
\begin{tikzpicture}[auto, thick, main node/.style={draw, minimum width = 1.2cm, minimum height = .7cm}]
    \tikzset{every node}=[font=\footnotesize\sffamily]
    
    \fill [rounded corners=1ex, gray!10](-0.9, -0.65) rectangle (7.0, 1.0);
    \node[] (3) at (3.1, 0.7) {Phase 1};
    
    \fill [rounded corners=1ex, gray!10](-0.9, -0.85) rectangle (7.0, -2.5);
    \node[] (3) at (3.1, -1.15) {Phase 2};

    % first half
    \node[main node] (1) {$C_1$};
    \node[main node] (2) [right=1cm of 1] {$C_2$};
    \node[] (3) [right=1cm of 2] {$\cdots$};
    \node[main node] (4) [right=1cm of 3] {$C_{m/2 + 1}$};
    
    \path
    (1) edge[-] node[above]{$T_1$} (2)
    (2) edge[-] node[above]{$T_2$} (3)
    (3) edge[-] node[above]{$T_{m/2}$} (4)
    ;
    
    % second half
    \node[main node] (5) [below=1.15cm of 4] {$C_{m/2 + 2}$};
    \node[main node] (6) [left=1cm of 5] {$C_{m/2 + 3}$};
    \node[] (7) [left=1cm of 6] {$\cdots$};x
    \node[main node] (8) [left=1cm of 7] {$C_{m}$};
    
    \path
    (4) edge[-latex] node[right, yshift=-2ex]{$T_1$} (5)
    (5) edge[-latex] node[above]{$T_2$} (6)
    (6) edge[-latex] node[above]{$T_3$} (7)
    (7) edge[-latex] node[above]{$T_{m/2 - 1}$} (8)
    (8) edge[-latex, dashed] node[left, yshift=-2ex]{$T_{m/2}$} (1)
    ;

\end{tikzpicture}
\caption{\label{fig:create-circle-hv} Diagram of the creation of circular-hypervectors. Phase 1 shows the level-hypervectors and the transformations between them. Phase 2 shows the use of transformations for the second half of the circle. The dashed transformation is shown to complete the circle but is not needed since $C_1$ is already known.} 
\end{figure}
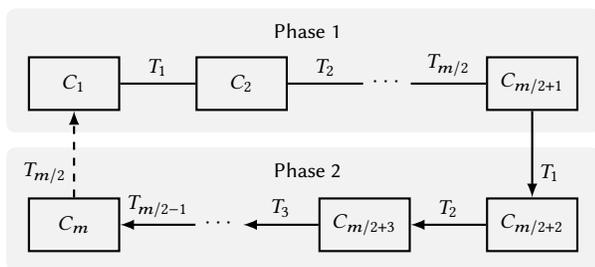

The creation of circular-hypervectors, shown in Figure~\ref{fig:create-circle-hv}, is divided into two phases, one for each half of the circle. The first half is simply a set of $m/2 + 1$ level-hypervectors: 
\begin{align*}
    C_i = L_i \text{ for } i \in \left\{1, \dots, m/2 + 1\right\}
\end{align*}
where $C_1$ and $C_{m/2 + 1}$ are quasi-orthogonal, ensuring that the opposite side of the circle is completely dissimilar. The second half is created by applying the transitions between the levels of the first half, in order, to the last circular-hypervector: 
\begin{align} \label{eq:circle-phase2}
    C_i = C_{i-1} \otimes T_{i - m/2 - 1}, \text{ for } i \in \left\{m/2 + 2, \dots, m\right\}
\end{align}
where the transition $T_i = C_{i} \otimes C_{i + 1}$ are the flipped bits between levels $i$ and $i + 1$. The combined transitions $\{T_1, \dots, T_{m/2}\}$ are equal to the transition from $C_1$ to $C_{m/2 + 1}$ such that:
\begin{align*}
    C_{m/2 + 1} = C_{1} \otimes \bigotimes_{i = 1}^{m/2} T_i
\end{align*}
Since binding is its own inverse, the transitions bound to $C_{i-1}$ in Equation~\ref{eq:circle-phase2} make the new hypervector $C_{i}$ closer to $C_1$. Moreover, the transitions $\{T_1, \dots, T_{m/2}\}$ occur in any half of the circle, ensuring that the hypervector at the opposite side from any point is always quasi-orthogonal to it.

% Locally, however, the transition ensures that the new hypervector is some distance away from the last hypervector. \mike{Add stronger claim that the transformation ensure that the new hypervector is some distance further from all the hypervectors in the last half circle. Also, look for theory in linear algebra that can explain this.}

% \subsection{Adjusting randomness}
\subsection{Controlling the trade-off between correlation preservation and information content}
\label{sec:IncreasingRandomness}
% The discussion in Section~\ref{sec:imporance-orthogonality} illustrated the benefit of relaxing the restriction of exact orthogonality, resulting in greater information content. Another restriction to the level and circular-hypervector sets thus far has been the endpoints, which are set to be the only pair within the set that is orthogonal. We argue that more powerful models can be created if this constraint is lifted as well. The random basis set is very capable at representation as it is the least restricted set. However, it is lacking the ability to generalize precisely because every hypervector is quasi-orthogonal, that is, no relative correlation is preserved. The level and circular sets preserve relative correlation but are restricted in information content. 

The discussion in Section~\ref{sec:imporance-orthogonality} illustrated the benefit of relaxing the distance between generated hypervectors, resulting in greater information content. Another concern is that, while level and circular-hypervectors have the ability to translate important correlations into hyperspace, from the perspective of information content this diminishes their representational power: by forcing the vectors to be correlated, the probability distribution over the possible outcomes becomes more concentrated, decreasing the entropy.
We argue that more powerful models can be created if this constraint, i.e., the correlation, is relaxed as well.

The random-hypervectors are the most capable at representation as they are sampled uniformly, without any restriction. However, for this very reason they are unable to map existing correlations in the input space to the hyperspace. The level and circular sets preserve correlation but are restricted in information content. The ideal basis set is then expected to have a balance between its ability to preserve correlation and its information content.

% To address this trade-off, we introduce a hyperparameter $r \in [0, 1]$ that interpolates between a level or circular-hypervector set, with quasi-orthogonal endpoints, and the random set where every pair of vectors is quasi-orthogonal. 
To address this trade-off, we introduce a hyperparameter $r \in [0, 1]$ that interpolates between a level or circular-hypervector set, and the random set. As can be seen in Figure~\ref{fig:r-value-circle}, the $r$-hyperparameter changes the amount of correlation between neighboring nodes. Intermediate values of $r$ thus enable higher information content while locally the relative correlation is preserved.

\begin{figure}[H]
\centering
\input{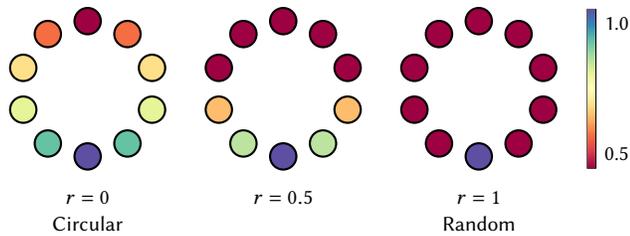}
\caption{\label{fig:r-value-circle} Effect of the $r$-hyperparameter on the similarities between each node and the bottom reference node in a circular set of 10 hypervectors.}
\end{figure}

To interpolate we concatenate multiple level-hypervector sets created with Algorithm~\ref{alg:level_hvs}. The last hypervector of one set becomes the first hypervector of the next set. The number of transitions $n$ per level-hypervector set is given by:
\begin{align*}
n = r + (1 - r)(m - 1)    
\end{align*}
where $m$ is the total number of hypervectors in the concatenated set. Each level in the concatenated set is obtained by using the threshold value:
\begin{align*}
 \tau_l = 1 - \frac{(l - 1)\; \mathrm{mod}\; n}{n}   
\end{align*}
When $r=1$, each level set contains only two hypervectors, (i.e., one transition) resulting in a set identical to a random-hypervector set. For circular-hypervectors the change only applies to phase 1 (see Section~\ref{sec:creatingCircularHVs}). 

%  The number of randomly sampled hypervectors needed to perform the stacking is $\ceil*{\frac{(m - 1)}{n} + 1}$ between each random hypervector a filter vector $\Theta$ is sampled. The threshold value $\tau$ is $1 - \frac{l \mod n}{n}$ and the level-set index is obtained by $\floor*{\frac{l}{n}}$ Then for each level get the boundary random hypervectors and the filter vector and use tau to interpolate between the boundaries. and a filter vector $\Theta$ and boundary random-hypervectors $L_1$ and $L_m$ of the $i$-th concatenated set, where $i = \floor*{(l - 1)/n} + 1$.

\section{Experiments}
\label{sec:experiments}

We evaluate circular-hypervectors in classification and regression settings where they are compared to random and level-hypervectors. The $r$-hyperparameter is evaluated by observing its effect on the same two settings.

\subsection{Classification}
\label{sec:classificationResults}
To evaluate the performance of circular-hypervectors in a classification setting, we use the JHU-ISI Gesture and Skill Assessment Working Set (JIGSAWS) dataset~\cite{gao2014jhu,ahmidi2017dataset}.
% --- a public dataset containing data from surgeons operating the \textit{da Vinci} robot while performing surgical tasks.
The dataset contains videos accompanied by 76-dimensional kinematic data of three different surgical tasks, performed by eight different surgeons, operating the \textit{da Vinci} robot. As our goal is to show whether circular-hypervectors are more appropriate for dealing with circular data, we use a subset of the dataset containing circular data. Specifically, we use the 18 kinematic variables that represent the rotation matrices of the left master tool manipulator and patient-side manipulator.

Each sample has a label which indicates the surgical activity called ``gestures.'' There are 15 gestures, each representing intentional surgical actions such as ``orienting needle,'' ``pushing needle through tissue'' and ``transferring needle from left to right.'' The task is to classify which gesture is being performed. For this, we used the standard classification framework presented in Section~\ref{sec:HDCclassification}.

A sample is encoded as $\textstyle \bigoplus_{i = 1}^{18} K_i \otimes V_i$ where the key $K_i$ is a random-hypervector that represents the index $i$, and the value $V_i$ is encoded as a random, level or circular-hypervector, depending on the experiment. For each of the three surgical tasks, \textit{Knot Tying}, \textit{Needle Passing} and \textit{Suturing}, a model was trained on the data from surgeon "D", one of the most experienced in the experiment.

\begin{table}
  \caption{Classification accuracy results. The circular hypervectors have $r=0.1$.}
  \label{tab:classification-acc}
  \begin{tabular}{lccc}
    \toprule
    Dataset & Random & Level & Circular\\
    \midrule
    Knot Tying & 76.6\% & 75.9\% & \textbf{84.0\%}\\
    Needle Passing & 76.0\% & 76.0\% & \textbf{83.6\%}\\
    Suturing & 73.0\% & 60.4\% & \textbf{78.7\%}\\
  \bottomrule
\end{tabular}
\end{table}

The results comparing the classification accuracy of each basis-hypervector set for each surgical task are shown in Table~\ref{tab:classification-acc}. The circular-hypervectors perform an average of 7.2\% better than random-hypervectors, which in-turn slightly outperform level-hypervectors. The training and evaluation running time are nearly equivalent among all basis sets because the one-time differentiating cost of generating the basis set is negligible compared to the training time.

\subsection{Regression}
\label{sec:regressionResults}

In addition to classification, we evaluate the performance of circular-hypervectors on two regression tasks. The first contains temperature data measured in Beijing, from March 2013 till February 2017 at the Aotizhongxin station~\cite{zhang2017cautionary}, available on the UCI Machine Learning Repository~\cite{Dua:2019}. We hypothesize that the circular-hypervectors are more appropriate for representing the day of the year and the hour of the day in this setting because they are both proxies of angular values: the angle of the earth in its orbit around the sun and the earth around its axis, respectively. Since both of these values are highly correlated with the outside temperature, we expect an HDC model that preserves the angular correlation to be better at forecasting the temperature.

Each model was trained on the first 70\% of the data using the regression framework described in Section~\ref{sec:HDCregression}. The samples are encoded as $Y \otimes D \otimes H$ with the year $Y$, day $D$, and hour $H$ hypervectors. The year hypervector is a level-hypervector, enabling the model to capture macro trends in temperature, such as global warming. The day and hour hypervectors change between random, level, and circular depending on the experiment. The label is the temperature at a given time, encoded as a level-hypervector. After the models are trained, one for each type of basis-hypervector, their performance is measured by calculating the mean squared error on predictions of the last 30\% of the dataset.

The second dataset contains power level measurements from the Mars Express satellite, provided by the European Space Agency~\cite{dressler2019mars}. The power level of the satellite fluctuates based on its orbit and the on-board energy consumption~\cite{lucas2017machine}. A more accurate model for available power means that the safety margins can be reduced, therefore allowing more energy to be dedicated towards scientific operations.

A training sample consists of the elapsed fraction of Mars' orbit around the sun, called the mean anomaly. The encoding of the mean anomaly changes based on the experiment between random, level, and circular-hypervectors. The label is the power level at a given time, encoded as a level-hypervector. The data is randomly split between 70\% training and 30\% testing. The performance of each basis set is determined on the test split using the mean squared error metric.

\begin{figure}[h]
 \centering
    \scalebox{0.70}{\subimport*{content/images/}{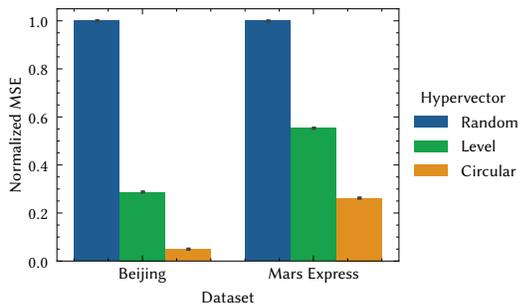}}
    \caption{\label{fig:regression-mse} Regression normalized mean squared error results for each basis hypervector type. Normalized against the performance of random hypervectors. The circular hypervectors have $r=0.01$.} 
\end{figure}

The results for both regression tasks are presented in Table~\ref{tab:regression-mse}. Circular-hypervectors reduce the error by 67.7\% and 84.4\% on average compared to level-hypervectors and random-hypervectors, respectively. These results, in combination with those for classification, illustrate how domain knowledge about the circular nature of the data can account for a significant improvement in performance. The efficiency of HDC ensures that most embedded systems can afford HDC models within their computation budget. This makes circular-hypervectors ideally suited for embedded and IoT application that are rich in circular data, such as robotics, where actuators and joints generate much angular data.
 
\begin{table}
  \caption{Regression mean squared error results. The circular hypervectors have $r=0.01$.}
  \label{tab:regression-mse}
  \begin{tabular}{lccc}
    \toprule
    Dataset & Random & Level & Circular\\
    \midrule
    Beijing & 441.1 & 126.8 & \textbf{21.9}\\
    Mars Express & 1294.1 & 715.6 & \textbf{339.1}\\
  \bottomrule
\end{tabular}
\end{table}

\subsection{R-value}
\label{sec:exp-r-value}

We evaluate the $r$-hyperparameter by observing the performance of the classification and regression tasks described above as we vary the $r$-value between 0 and 1, interpolating between circular and random-hypervectors. In order to aggregate the effects in one plot we use the normalized mean squared error metric for the regression tasks and the normalized accuracy error, defined as $\frac{1 - \alpha}{1 - \bar{\alpha}}$ where $\alpha$ is the accuracy and $\bar{\alpha}$ the reference accuracy. The reference for all tasks is set to the performance of random-hypervectors, equivalent to the endpoint of the interpolation.

\begin{figure}[h]
 \centering
 \scalebox{0.7}{\subimport*{content/images/}{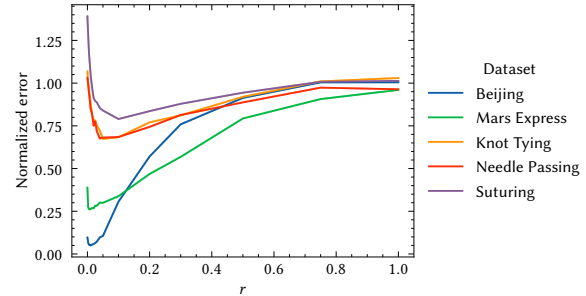}}
  \caption{\label{fig:r-value} Error of the circular-hypervector basis set with varying $r$-hyperparameter, normalized against the random-hypervector set performance per dataset.} 
\end{figure}

The normalized error plot shown in Figure~\ref{fig:r-value} indicates that better performance can be achieved when $r > 0$, as is inline with the theoretical analysis presented in Section~\ref{sec:imporance-orthogonality}. After improving initially, as $r$ gets closer to 1, the performance approaches that of the random-hypervectors. These results indicate the importance of considering the information content of a basis-hypervector set as we propose in our  work. In addition, it shows the value of the proposed $r$-parameter to control the trade-off between representation power and the ability to preserve correlations in the hyperspace mapping.
\section{Conclusion}
\label{sec:conclusion}
We study basis-hypervectors: stochastically created vectors used to represent atomic information in HDC. Taking inspiration from information theory, we propose a methodology to create level-hypervectors with greater representational power. Furthermore, we introduce a method to handle circular data in HDC. This method, which uses the improved level-hypervectors, is the first approach to learning from circular data in HDC. We believe that these contributions have the potential to benefit HDC in general, as they improve the accuracy of models based on circular and real data, present in most learning applications.

\clearpage
\bibliographystyle{ACM-Reference-Format}
\bibliography{references}

%%% -*-BibTeX-*-
%%% Do NOT edit. File created by BibTeX with style
%%% ACM-Reference-Format-Journals [18-Jan-2012].

\begin{thebibliography}{43}

%%% ====================================================================
%%% NOTE TO THE USER: you can override these defaults by providing
%%% customized versions of any of these macros before the \bibliography
%%% command.  Each of them MUST provide its own final punctuation,
%%% except for \shownote{}, \showDOI{}, and \showURL{}.  The latter two
%%% do not use final punctuation, in order to avoid confusing it with
%%% the Web address.
%%%
%%% To suppress output of a particular field, define its macro to expand
%%% to an empty string, or better, \unskip, like this:
%%%
%%% \newcommand{\showDOI}[1]{\unskip}   % LaTeX syntax
%%%
%%% \def \showDOI #1{\unskip}           % plain TeX syntax
%%%
%%% ====================================================================

\ifx \showCODEN    \undefined \def \showCODEN     #1{\unskip}     \fi
\ifx \showDOI      \undefined \def \showDOI       #1{#1}\fi
\ifx \showISBNx    \undefined \def \showISBNx     #1{\unskip}     \fi
\ifx \showISBNxiii \undefined \def \showISBNxiii  #1{\unskip}     \fi
\ifx \showISSN     \undefined \def \showISSN      #1{\unskip}     \fi
\ifx \showLCCN     \undefined \def \showLCCN      #1{\unskip}     \fi
\ifx \shownote     \undefined \def \shownote      #1{#1}          \fi
\ifx \showarticletitle \undefined \def \showarticletitle #1{#1}   \fi
\ifx \showURL      \undefined \def \showURL       {\relax}        \fi
% The following commands are used for tagged output and should be
% invisible to TeX
\providecommand\bibfield[2]{#2}
\providecommand\bibinfo[2]{#2}
\providecommand\natexlab[1]{#1}
\providecommand\showeprint[2][]{arXiv:#2}

\bibitem[Ahmidi et~al\mbox{.}(2017)]%
        {ahmidi2017dataset}
\bibfield{author}{\bibinfo{person}{Narges Ahmidi}, \bibinfo{person}{Lingling
  Tao}, {et~al\mbox{.}}} \bibinfo{year}{2017}\natexlab{}.
\newblock \showarticletitle{A dataset and benchmarks for segmentation and
  recognition of gestures in robotic surgery}.
\newblock \bibinfo{journal}{\emph{Transactions on Biomedical Engineering}}
  \bibinfo{volume}{64}, \bibinfo{number}{9} (\bibinfo{year}{2017}),
  \bibinfo{pages}{2025--2041}.
\newblock


\bibitem[Branco et~al\mbox{.}(2019)]%
        {branco2019machine}
\bibfield{author}{\bibinfo{person}{S{\'e}rgio Branco},
  \bibinfo{person}{Andr{\'e}~G Ferreira}, {and} \bibinfo{person}{Jorge
  Cabral}.} \bibinfo{year}{2019}\natexlab{}.
\newblock \showarticletitle{Machine learning in resource-scarce embedded
  systems, FPGAs, and end-devices: A survey}.
\newblock \bibinfo{journal}{\emph{Electronics}} \bibinfo{volume}{8},
  \bibinfo{number}{11} (\bibinfo{year}{2019}), \bibinfo{pages}{1289}.
\newblock


\bibitem[Cabella and Marinucci(2009)]%
        {cabella2009statistical}
\bibfield{author}{\bibinfo{person}{Paolo Cabella} {and}
  \bibinfo{person}{Domenico Marinucci}.} \bibinfo{year}{2009}\natexlab{}.
\newblock \showarticletitle{Statistical challenges in the analysis of cosmic
  microwave background radiation}.
\newblock \bibinfo{journal}{\emph{The Annals of Applied Statistics}}
  \bibinfo{volume}{3}, \bibinfo{number}{1} (\bibinfo{year}{2009}),
  \bibinfo{pages}{61--95}.
\newblock


\bibitem[Chirikjian(2000)]%
        {chirikjian2000engineering}
\bibfield{author}{\bibinfo{person}{Gregory~S Chirikjian}.}
  \bibinfo{year}{2000}\natexlab{}.
\newblock \bibinfo{booktitle}{\emph{Engineering applications of noncommutative
  harmonic analysis: with emphasis on rotation and motion groups}}.
\newblock \bibinfo{publisher}{CRC press}.
\newblock


\bibitem[Crick et~al\mbox{.}(1989)]%
        {crick1989recent}
\bibfield{author}{\bibinfo{person}{Francis Crick} {et~al\mbox{.}}}
  \bibinfo{year}{1989}\natexlab{}.
\newblock \showarticletitle{The recent excitement about neural networks}.
\newblock \bibinfo{journal}{\emph{Nature}} \bibinfo{volume}{337},
  \bibinfo{number}{6203} (\bibinfo{year}{1989}), \bibinfo{pages}{129--132}.
\newblock


\bibitem[Den{\`e}ve et~al\mbox{.}(2017)]%
        {deneve2017brain}
\bibfield{author}{\bibinfo{person}{Sophie Den{\`e}ve}, \bibinfo{person}{Alireza
  Alemi}, {and} \bibinfo{person}{Ralph Bourdoukan}.}
  \bibinfo{year}{2017}\natexlab{}.
\newblock \showarticletitle{The brain as an efficient and robust adaptive
  learner}.
\newblock \bibinfo{journal}{\emph{Neuron}} \bibinfo{volume}{94},
  \bibinfo{number}{5} (\bibinfo{year}{2017}), \bibinfo{pages}{969--977}.
\newblock


\bibitem[Dressler(2019)]%
        {dressler2019mars}
\bibfield{author}{\bibinfo{person}{Thomas Dressler}.}
  \bibinfo{year}{2019}\natexlab{}.
\newblock \bibinfo{title}{THP 2.0 - Thermal Power Model Redesigned}.
\newblock
\newblock


\bibitem[Dua and Graff(2017)]%
        {Dua:2019}
\bibfield{author}{\bibinfo{person}{Dheeru Dua} {and} \bibinfo{person}{Casey
  Graff}.} \bibinfo{year}{2017}\natexlab{}.
\newblock \bibinfo{title}{{UCI} Machine Learning Repository}.
\newblock
\newblock
\urldef\tempurl%
\url{http://archive.ics.uci.edu/ml}
\showURL{%
\tempurl}


\bibitem[Fisher(1995)]%
        {fisher1995statistical}
\bibfield{author}{\bibinfo{person}{Nicholas~I Fisher}.}
  \bibinfo{year}{1995}\natexlab{}.
\newblock \bibinfo{booktitle}{\emph{Statistical analysis of circular data}}.
\newblock \bibinfo{publisher}{Cambridge University Press}.
\newblock


\bibitem[Gao et~al\mbox{.}(2006)]%
        {gao2006application}
\bibfield{author}{\bibinfo{person}{Fei Gao}, \bibinfo{person}{Kee-Seng Chia},
  \bibinfo{person}{Ingela Krantz}, \bibinfo{person}{Per Nordin}, {and}
  \bibinfo{person}{David Machin}.} \bibinfo{year}{2006}\natexlab{}.
\newblock \showarticletitle{On the application of the von Mises distribution
  and angular regression methods to investigate the seasonality of disease
  onset}.
\newblock \bibinfo{journal}{\emph{Statistics in Medicine}}
  \bibinfo{volume}{25}, \bibinfo{number}{9} (\bibinfo{year}{2006}),
  \bibinfo{pages}{1593--1618}.
\newblock


\bibitem[Gao et~al\mbox{.}(2014)]%
        {gao2014jhu}
\bibfield{author}{\bibinfo{person}{Yixin Gao} {et~al\mbox{.}}}
  \bibinfo{year}{2014}\natexlab{}.
\newblock \showarticletitle{Jhu-isi gesture and skill assessment working set
  (JIGSAWS): A surgical activity dataset for human motion modeling}. In
  \bibinfo{booktitle}{\emph{MICCAI workshop: M2cai}}, Vol.~\bibinfo{volume}{3}.
  \bibinfo{pages}{3}.
\newblock


\bibitem[Ge and Parhi(2020)]%
        {ge2020classification}
\bibfield{author}{\bibinfo{person}{Lulu Ge} {and} \bibinfo{person}{Keshab~K
  Parhi}.} \bibinfo{year}{2020}\natexlab{}.
\newblock \showarticletitle{Classification using hyperdimensional computing: A
  review}.
\newblock \bibinfo{journal}{\emph{Circuits and Systems Magazine}}
  \bibinfo{volume}{20}, \bibinfo{number}{2} (\bibinfo{year}{2020}),
  \bibinfo{pages}{30--47}.
\newblock


\bibitem[Heddes et~al\mbox{.}(2022)]%
        {heddes2022hyperdimensional}
\bibfield{author}{\bibinfo{person}{Mike Heddes}, \bibinfo{person}{Igor Nunes},
  \bibinfo{person}{Tony Givargis}, \bibinfo{person}{Alexandru Nicolau}, {and}
  \bibinfo{person}{Alex Veidenbaum}.} \bibinfo{year}{2022}\natexlab{}.
\newblock \showarticletitle{Hyperdimensional Hashing: A Robust and Efficient
  Dynamic Hash Table}. In \bibinfo{booktitle}{\emph{Design Automation
  Conference (DAC)}}.
\newblock


\bibitem[Hersche et~al\mbox{.}(2020)]%
        {hersche2020integrating}
\bibfield{author}{\bibinfo{person}{Michael Hersche},
  \bibinfo{person}{Edoardo~Mello Rella}, \bibinfo{person}{Alfio Di~Mauro},
  \bibinfo{person}{Luca Benini}, {and} \bibinfo{person}{Abbas Rahimi}.}
  \bibinfo{year}{2020}\natexlab{}.
\newblock \showarticletitle{Integrating event-based dynamic vision sensors with
  sparse hyperdimensional computing: a low-power accelerator with online
  learning capability}. In \bibinfo{booktitle}{\emph{International Symposium on
  Low Power Electronics and Design (ISLPED)}}. \bibinfo{pages}{169--174}.
\newblock


\bibitem[Holzmann et~al\mbox{.}(2006)]%
        {holzmann2006hidden}
\bibfield{author}{\bibinfo{person}{Hajo Holzmann}, \bibinfo{person}{Axel Munk},
  \bibinfo{person}{Max Suster}, {and} \bibinfo{person}{Walter Zucchini}.}
  \bibinfo{year}{2006}\natexlab{}.
\newblock \showarticletitle{Hidden Markov models for circular and
  linear-circular time series}.
\newblock \bibinfo{journal}{\emph{Environmental and Ecological Statistics}}
  \bibinfo{volume}{13}, \bibinfo{number}{3} (\bibinfo{year}{2006}),
  \bibinfo{pages}{325--347}.
\newblock


\bibitem[James et~al\mbox{.}(2017)]%
        {james2017historical}
\bibfield{author}{\bibinfo{person}{Conrad~D James}, \bibinfo{person}{James~B
  Aimone}, \bibinfo{person}{Nadine~E Miner}, \bibinfo{person}{Craig~M
  Vineyard}, \bibinfo{person}{Fredrick~H Rothganger},
  \bibinfo{person}{Kristofor~D Carlson}, \bibinfo{person}{Samuel~A Mulder},
  \bibinfo{person}{Timothy~J Draelos}, \bibinfo{person}{Aleksandra Faust},
  \bibinfo{person}{Matthew~J Marinella}, {et~al\mbox{.}}}
  \bibinfo{year}{2017}\natexlab{}.
\newblock \showarticletitle{A historical survey of algorithms and hardware
  architectures for neural-inspired and neuromorphic computing applications}.
\newblock \bibinfo{journal}{\emph{Biologically Inspired Cognitive
  Architectures}}  \bibinfo{volume}{19} (\bibinfo{year}{2017}),
  \bibinfo{pages}{49--64}.
\newblock


\bibitem[Kaelbling(1993)]%
        {kaelbling1993learning}
\bibfield{author}{\bibinfo{person}{Leslie~Pack Kaelbling}.}
  \bibinfo{year}{1993}\natexlab{}.
\newblock \bibinfo{booktitle}{\emph{Learning in embedded systems}}.
\newblock \bibinfo{publisher}{MIT press}.
\newblock


\bibitem[Kanerva(1988)]%
        {kanerva1988sparse}
\bibfield{author}{\bibinfo{person}{Pentti Kanerva}.}
  \bibinfo{year}{1988}\natexlab{}.
\newblock \bibinfo{booktitle}{\emph{Sparse distributed memory}}.
\newblock \bibinfo{publisher}{MIT press}.
\newblock


\bibitem[Kanerva(2009)]%
        {kanerva2009hyperdimensional}
\bibfield{author}{\bibinfo{person}{Pentti Kanerva}.}
  \bibinfo{year}{2009}\natexlab{}.
\newblock \showarticletitle{Hyperdimensional computing: An introduction to
  computing in distributed representation with high-dimensional random
  vectors}.
\newblock \bibinfo{journal}{\emph{Cognitive computation}} \bibinfo{volume}{1},
  \bibinfo{number}{2} (\bibinfo{year}{2009}), \bibinfo{pages}{139--159}.
\newblock


\bibitem[Karger et~al\mbox{.}(1997)]%
        {karger1997consistent}
\bibfield{author}{\bibinfo{person}{David Karger}, \bibinfo{person}{Eric
  Lehman}, \bibinfo{person}{Tom Leighton}, \bibinfo{person}{Rina Panigrahy},
  \bibinfo{person}{Matthew Levine}, {and} \bibinfo{person}{Daniel Lewin}.}
  \bibinfo{year}{1997}\natexlab{}.
\newblock \showarticletitle{Consistent hashing and random trees: Distributed
  caching protocols for relieving hot spots on the world wide web}. In
  \bibinfo{booktitle}{\emph{Symposium on Theory of computing (STOC)}}.
  \bibinfo{pages}{654--663}.
\newblock


\bibitem[Kempter et~al\mbox{.}(2012)]%
        {kempter2012quantifying}
\bibfield{author}{\bibinfo{person}{Richard Kempter}, \bibinfo{person}{Christian
  Leibold}, {et~al\mbox{.}}} \bibinfo{year}{2012}\natexlab{}.
\newblock \showarticletitle{Quantifying circular--linear associations:
  Hippocampal phase precession}.
\newblock \bibinfo{journal}{\emph{Journal of Neuroscience Methods}}
  \bibinfo{volume}{207}, \bibinfo{number}{1} (\bibinfo{year}{2012}),
  \bibinfo{pages}{113--124}.
\newblock


\bibitem[LeCun et~al\mbox{.}(2015)]%
        {lecun2015deep}
\bibfield{author}{\bibinfo{person}{Yann LeCun}, \bibinfo{person}{Yoshua
  Bengio}, {and} \bibinfo{person}{Geoffrey Hinton}.}
  \bibinfo{year}{2015}\natexlab{}.
\newblock \showarticletitle{Deep learning}.
\newblock \bibinfo{journal}{\emph{Nature}} \bibinfo{volume}{521},
  \bibinfo{number}{7553} (\bibinfo{year}{2015}), \bibinfo{pages}{436--444}.
\newblock


\bibitem[Lee(2010)]%
        {lee2010circular}
\bibfield{author}{\bibinfo{person}{Alan Lee}.} \bibinfo{year}{2010}\natexlab{}.
\newblock \showarticletitle{Circular data}.
\newblock \bibinfo{journal}{\emph{Wiley Interdisciplinary Reviews:
  Computational Statistics}} \bibinfo{volume}{2}, \bibinfo{number}{4}
  (\bibinfo{year}{2010}), \bibinfo{pages}{477--486}.
\newblock


\bibitem[Lucas and Boumghar(2017)]%
        {lucas2017machine}
\bibfield{author}{\bibinfo{person}{Luke Lucas} {and} \bibinfo{person}{Redouane
  Boumghar}.} \bibinfo{year}{2017}\natexlab{}.
\newblock \showarticletitle{Machine learning for spacecraft operations
  support-The Mars Express power challenge}. In
  \bibinfo{booktitle}{\emph{International Conference on Space Mission
  Challenges for Information Technology (SMC-IT)}}. IEEE,
  \bibinfo{pages}{82--87}.
\newblock


\bibitem[Lund(1999)]%
        {lund1999least}
\bibfield{author}{\bibinfo{person}{Ulric Lund}.}
  \bibinfo{year}{1999}\natexlab{}.
\newblock \showarticletitle{Least circular distance regression for directional
  data}.
\newblock \bibinfo{journal}{\emph{Journal of Applied Statistics}}
  \bibinfo{volume}{26}, \bibinfo{number}{6} (\bibinfo{year}{1999}),
  \bibinfo{pages}{723--733}.
\newblock


\bibitem[MacKay et~al\mbox{.}(2003)]%
        {mackay2003information}
\bibfield{author}{\bibinfo{person}{David~JC MacKay}, \bibinfo{person}{David~JC
  Mac~Kay}, {et~al\mbox{.}}} \bibinfo{year}{2003}\natexlab{}.
\newblock \bibinfo{booktitle}{\emph{Information theory, inference and learning
  algorithms}}.
\newblock \bibinfo{publisher}{Cambridge University Press}.
\newblock


\bibitem[Manabat et~al\mbox{.}(2019)]%
        {manabat2019performance}
\bibfield{author}{\bibinfo{person}{Alec~Xavier Manabat},
  \bibinfo{person}{Celine~Rose Marcelo}, {et~al\mbox{.}}}
  \bibinfo{year}{2019}\natexlab{}.
\newblock \showarticletitle{Performance analysis of hyperdimensional computing
  for character recognition}. In \bibinfo{booktitle}{\emph{International
  Symposium on Multimedia and Communication Technology (ISMAC)}}. IEEE,
  \bibinfo{pages}{1--5}.
\newblock


\bibitem[Marblestone et~al\mbox{.}(2016)]%
        {marblestone2016toward}
\bibfield{author}{\bibinfo{person}{Adam~H Marblestone}, \bibinfo{person}{Greg
  Wayne}, {and} \bibinfo{person}{Konrad~P Kording}.}
  \bibinfo{year}{2016}\natexlab{}.
\newblock \showarticletitle{Toward an integration of deep learning and
  neuroscience}.
\newblock \bibinfo{journal}{\emph{Frontiers in Computational Neuroscience}}
  (\bibinfo{year}{2016}), \bibinfo{pages}{94}.
\newblock


\bibitem[Mardia et~al\mbox{.}(2000)]%
        {mardia2000directional}
\bibfield{author}{\bibinfo{person}{Kanti~V Mardia}, \bibinfo{person}{Peter~E
  Jupp}, {and} \bibinfo{person}{KV Mardia}.} \bibinfo{year}{2000}\natexlab{}.
\newblock \bibinfo{booktitle}{\emph{Directional statistics}}.
  Vol.~\bibinfo{volume}{2}.
\newblock \bibinfo{publisher}{Wiley Online Library}.
\newblock


\bibitem[Marinucci and Peccati(2011)]%
        {marinucci2011random}
\bibfield{author}{\bibinfo{person}{Domenico Marinucci} {and}
  \bibinfo{person}{Giovanni Peccati}.} \bibinfo{year}{2011}\natexlab{}.
\newblock \bibinfo{booktitle}{\emph{Random fields on the sphere:
  representation, limit theorems and cosmological applications}}.
  Vol.~\bibinfo{volume}{389}.
\newblock \bibinfo{publisher}{Cambridge University Press}.
\newblock


\bibitem[Nunes et~al\mbox{.}(2022)]%
        {nunes2022hyperdimensional}
\bibfield{author}{\bibinfo{person}{Igor Nunes}, \bibinfo{person}{Mike Heddes},
  \bibinfo{person}{Tony Givargis}, \bibinfo{person}{Alexandru Nicolau}, {and}
  \bibinfo{person}{Alex Veidenbaum}.} \bibinfo{year}{2022}\natexlab{}.
\newblock \showarticletitle{GraphHD: Efficient graph classification using
  hyperdimensional computing}. In \bibinfo{booktitle}{\emph{Design, Automation
  Test in Europe Conference Exhibition (DATE)}}.
\newblock


\bibitem[Pewsey and Garc{\'\i}a-Portugu{\'e}s(2021)]%
        {pewsey2021recent}
\bibfield{author}{\bibinfo{person}{Arthur Pewsey} {and}
  \bibinfo{person}{Eduardo Garc{\'\i}a-Portugu{\'e}s}.}
  \bibinfo{year}{2021}\natexlab{}.
\newblock \showarticletitle{Recent advances in directional statistics}.
\newblock \bibinfo{journal}{\emph{Test}} \bibinfo{volume}{30},
  \bibinfo{number}{1} (\bibinfo{year}{2021}), \bibinfo{pages}{1--58}.
\newblock


\bibitem[Rachkovskiy et~al\mbox{.}(2005)]%
        {rachkovskiy2005sparse}
\bibfield{author}{\bibinfo{person}{Dmitriy~A Rachkovskiy},
  \bibinfo{person}{Sergey~V Slipchenko}, \bibinfo{person}{Ernst~M Kussul},
  {and} \bibinfo{person}{Tatyana~N Baidyk}.} \bibinfo{year}{2005}\natexlab{}.
\newblock \showarticletitle{Sparse binary distributed encoding of scalars}.
\newblock \bibinfo{journal}{\emph{Journal of Automation and Information
  Sciences}} \bibinfo{volume}{37}, \bibinfo{number}{6} (\bibinfo{year}{2005}).
\newblock


\bibitem[Rahimi et~al\mbox{.}(2016a)]%
        {rahimi2016hyperdimensional}
\bibfield{author}{\bibinfo{person}{Abbas Rahimi}, \bibinfo{person}{Simone
  Benatti}, \bibinfo{person}{Pentti Kanerva}, \bibinfo{person}{Luca Benini},
  {and} \bibinfo{person}{Jan~M Rabaey}.} \bibinfo{year}{2016}\natexlab{a}.
\newblock \showarticletitle{Hyperdimensional biosignal processing: A case study
  for EMG-based hand gesture recognition}. In
  \bibinfo{booktitle}{\emph{International Conference on Rebooting Computing
  (ICRC)}}. IEEE, \bibinfo{pages}{1--8}.
\newblock


\bibitem[Rahimi et~al\mbox{.}(2016b)]%
        {rahimi2016robust}
\bibfield{author}{\bibinfo{person}{Abbas Rahimi}, \bibinfo{person}{Pentti
  Kanerva}, {and} \bibinfo{person}{Jan~M Rabaey}.}
  \bibinfo{year}{2016}\natexlab{b}.
\newblock \showarticletitle{A robust and energy-efficient classifier using
  brain-inspired hyperdimensional computing}. In
  \bibinfo{booktitle}{\emph{International Symposium on Low Power Electronics
  and Design (ISLPED)}}. \bibinfo{pages}{64--69}.
\newblock


\bibitem[Schmidhuber(2015)]%
        {schmidhuber2015deep}
\bibfield{author}{\bibinfo{person}{J{\"u}rgen Schmidhuber}.}
  \bibinfo{year}{2015}\natexlab{}.
\newblock \showarticletitle{Deep learning in neural networks: An overview}.
\newblock \bibinfo{journal}{\emph{Neural Networks}}  \bibinfo{volume}{61}
  (\bibinfo{year}{2015}), \bibinfo{pages}{85--117}.
\newblock


\bibitem[Smith and Stanford(1990)]%
        {smith1990random}
\bibfield{author}{\bibinfo{person}{Derek Smith} {and} \bibinfo{person}{Paul
  Stanford}.} \bibinfo{year}{1990}\natexlab{}.
\newblock \showarticletitle{A random walk in Hamming space}. In
  \bibinfo{booktitle}{\emph{1990 IJCNN International Joint Conference on Neural
  Networks}}. IEEE, \bibinfo{pages}{465--470}.
\newblock


\bibitem[Stone(1973)]%
        {stone1973efficient}
\bibfield{author}{\bibinfo{person}{Harold~S Stone}.}
  \bibinfo{year}{1973}\natexlab{}.
\newblock \showarticletitle{An efficient parallel algorithm for the solution of
  a tridiagonal linear system of equations}.
\newblock \bibinfo{journal}{\emph{Journal of the ACM (JACM)}}
  \bibinfo{volume}{20}, \bibinfo{number}{1} (\bibinfo{year}{1973}),
  \bibinfo{pages}{27--38}.
\newblock


\bibitem[Thomas et~al\mbox{.}(2021)]%
        {thomas2021theoretical}
\bibfield{author}{\bibinfo{person}{Anthony Thomas}, \bibinfo{person}{Sanjoy
  Dasgupta}, {and} \bibinfo{person}{Tajana Rosing}.}
  \bibinfo{year}{2021}\natexlab{}.
\newblock \showarticletitle{Theoretical Foundations of Hyperdimensional
  Computing}.
\newblock \bibinfo{journal}{\emph{Journal of Artificial Intelligence Research
  (JAIR)}}  \bibinfo{volume}{72} (\bibinfo{year}{2021}),
  \bibinfo{pages}{215--249}.
\newblock


\bibitem[Tracey et~al\mbox{.}(2005)]%
        {tracey2005set}
\bibfield{author}{\bibinfo{person}{JA Tracey}, \bibinfo{person}{J Zhu}, {and}
  \bibinfo{person}{K Crooks}.} \bibinfo{year}{2005}\natexlab{}.
\newblock \showarticletitle{A set of nonlinear regression models for animal
  movement in response to a single landscape feature}.
\newblock \bibinfo{journal}{\emph{Journal of Agricultural, Biological and
  Environmental Statistics (JABES)}} \bibinfo{volume}{10}, \bibinfo{number}{1}
  (\bibinfo{year}{2005}), \bibinfo{pages}{1--18}.
\newblock


\bibitem[Whittington and Bogacz(2019)]%
        {whittington2019theories}
\bibfield{author}{\bibinfo{person}{James~CR Whittington} {and}
  \bibinfo{person}{Rafal Bogacz}.} \bibinfo{year}{2019}\natexlab{}.
\newblock \showarticletitle{Theories of error back-propagation in the brain}.
\newblock \bibinfo{journal}{\emph{Trends in Cognitive Sciences}}
  \bibinfo{volume}{23}, \bibinfo{number}{3} (\bibinfo{year}{2019}),
  \bibinfo{pages}{235--250}.
\newblock


\bibitem[Widdows and Cohen(2015)]%
        {widdows2015reasoning}
\bibfield{author}{\bibinfo{person}{Dominic Widdows} {and}
  \bibinfo{person}{Trevor Cohen}.} \bibinfo{year}{2015}\natexlab{}.
\newblock \showarticletitle{Reasoning with vectors: A continuous model for fast
  robust inference}.
\newblock \bibinfo{journal}{\emph{Logic Journal of the IGPL}}
  \bibinfo{volume}{23}, \bibinfo{number}{2} (\bibinfo{year}{2015}),
  \bibinfo{pages}{141--173}.
\newblock


\bibitem[Zhang et~al\mbox{.}(2017)]%
        {zhang2017cautionary}
\bibfield{author}{\bibinfo{person}{Shuyi Zhang}, \bibinfo{person}{Bin Guo},
  \bibinfo{person}{Anlan Dong}, \bibinfo{person}{Jing He},
  \bibinfo{person}{Ziping Xu}, {and} \bibinfo{person}{Song~Xi Chen}.}
  \bibinfo{year}{2017}\natexlab{}.
\newblock \showarticletitle{Cautionary tales on air-quality improvement in
  Beijing}.
\newblock \bibinfo{journal}{\emph{Royal Society A: Mathematical, Physical and
  Engineering Sciences}} \bibinfo{volume}{473}, \bibinfo{number}{2205}
  (\bibinfo{year}{2017}), \bibinfo{pages}{20170457}.
\newblock


\end{thebibliography}

\end{document}